\documentclass[twoside,american]{elsarticle}
\usepackage[T1]{fontenc}
\usepackage[utf8]{inputenc}
\usepackage{amsmath}
\usepackage{amsthm}
\usepackage{amssymb}
\makeatletter
\journal{}

\usepackage{babel}
\providecommand{\definitionname}{Definition}
\providecommand{\theoremname}{Theorem}

\makeatletter
\def\ps@pprintTitle{%
 \let\@oddhead\@empty
 \let\@evenhead\@empty
 \let\@oddfoot\@empty
 \let\@evenfoot\@empty}
\makeatother

\makeatother

\theoremstyle{plain}
\newtheorem{thm}{\protect\theoremname}
\theoremstyle{definition}
\newtheorem{defn}[thm]{\protect\definitionname}
\theoremstyle{remark}
\newtheorem{rem}[thm]{\protect\remarkname}
\theoremstyle{plain}
\newtheorem{cor}[thm]{\protect\corollaryname}
\theoremstyle{definition}
\newtheorem{example}[thm]{\protect\examplename}
\usepackage{babel}
\providecommand{\corollaryname}{Corollary}
\providecommand{\definitionname}{Definition}
\providecommand{\examplename}{Example}
\providecommand{\remarkname}{Remark}
\providecommand{\theoremname}{Theorem}

\begin{document}
\begin{frontmatter}
\title{Sufficient conditions for a Heuristic Rating Estimation Method application}
\author[wms]{Jacek~Szybowski\corref{cor1}}
\ead{szybowsk@agh.edu.pl}
\author[weaiib]{Konrad~Kułakowski}
\ead{kkulak@agh.edu.pl}
\author[suo]{Jiri Mazurek}
\ead{mazurek@opf.slu.cz}
\cortext[cor1]{Corresponding author}
\address[wms]{AGH University of Science and Technology, Faculty of Applied Mathematics,
Krakow, Poland}
\address[weaiib]{AGH University of Science and Technology, Faculty of Electrical Engineering,
Automatics, Computer Science and Biomedical Engineering, Krakow, Poland}
\address[suo]{Silesian University in Opava, Department of Informatics and Mathematics,
School of Business Administration in Karvina, Czech Republic}
\begin{abstract}
A series of papers has introduced the Heuristic Rating Estimation
method, which evaluates a set of alternatives based on pairwise comparisons
and the weights of reference alternatives. We formulate the conditions
under which the HRE method can be applied correctly. The research
considers both arithmetic and geometric algorithms for complete and
incomplete pairwise comparison methods. The illustrative examples
show that the estimations of inconsistency in the arithmetic variant
are optimal. 
\end{abstract}
\begin{keyword}
decision-making methods \sep MCDM \sep pairwise comparisons \sep
HRE \sep AHP \sep Heuristic rating estimation 
\end{keyword}
\end{frontmatter}

\section{Introduction}

Pairwise comparison (PC) methods constitute a well-established class
of decision-making tools used to derive priorities or rankings of
alternatives based on relative judgments. Among the most prominent
representatives of this family are methods such as the Analytic Hierarchy
Process (AHP), ELECTRE, PROMETHEE, MACBETH, TOPSIS, PAPRIKA, or the
Best--Worst Method \citep{BanaECosta1994baip,Brans2005pm,Figueira2005em,Hansen2008anmf,Rezaei2015bwmc,Saaty1977asmfSIMPL,Saaty1980tahp,Yoon1987arad}.

According to the classic work \citep{Miller1956tmns}, humans are
able to process only $7\pm2$ "pieces" of information at a time.
The fundamental advantage of pairwise comparisons lies in their ability
to reduce this cognitive burden, as decision-makers are required to
compare only two alternatives at a time rather than evaluate all options
simultaneously.

The origins of pairwise comparisons can be traced back to the 13th
century, when Ramon Llull applied comparative judgments to select
a prelate from a group of candidates \citep{Colomer2011rlfa}. The
method was later formalized in the early 20th century by L. L. Thurstone
in his Law of Comparative Judgment \citep{Thurstone1927tmop}, and
gained widespread popularity following the introduction of AHP by
T. L. Saaty in the late 1970s\citep{Saaty1977asmfSIMPL}. Since then,
PC methods have been successfully applied across a broad range of
disciplines, including engineering, economics, management, and social
sciences.

The Heuristic Rating Estimation (HRE) method, proposed in the mid-2010s
\citep{Kulakowski2014hrea,Kulakowski2015hreg,Kulakowski2016note},
represents a modification of classical pairwise comparison techniques.
Its distinctive feature is the explicit differentiation between two
categories of alternatives: those with known preference values (reference
alternatives) and those whose preferences are unknown. The objective
of HRE is to estimate the unknown priorities by exploiting pairwise
comparisons both among unknown alternatives and between unknown and
reference alternatives. This structure makes HRE particularly suitable
for decision problems in which partial, yet reliable, preference information
is available in advance. Typical situations where this setting occurs
include an introduction of a new product on the market while performance
of previous products is known, elections where a performance of previous
candidates is known, etc.

Two main variants of the HRE method have been developed: the arithmetic
and the geometric approaches \citep{Kulakowski2014hrea}, \citep{Kulakowski2016note}
\citep{Kulakowski2015hreg}. In the arithmetic version of the HRE,
the priority of an alternative is computed as an arithmetic mean of
its evaluations relative to other alternatives, whereas the geometric
version relies on geometric aggregation. Existing research on HRE
has primarily focused on issues related to solution existence \citep{Kulakowski2016note},
inconsistency assessment \citep{Kulakowski2020iifi}, and methodological
extensions, including multiple-criteria, incomplete or fuzzy pairwise
comparison matrices, see also \citep{Kedzior2023mchr,Kulakowski2026mbip}.

In practical applications, however, decision-makers often face incomplete
pairwise comparison data, which raises fundamental questions regarding
the solvability and uniqueness of the resulting estimation problem.
The aim of this paper is to address this gap by providing sufficient
conditions for the applicability of the HRE method in the case of
incomplete pairwise comparison matrices. Using tools from linear algebra
and spectral theory, we establish conditions under which the arithmetic
and geometric HRE formulations admit unique solutions.

While \citet{Kulakowski2026mbip} shares a common methodological background
with studies on mean-based incomplete pairwise comparison methods
with reference values, its contribution addresses a distinct and previously
underexplored problem. Existing mean-based approaches focus on extending
the HRE method by modifying aggregation rules and broadening admissible
comparison structures. In contrast, the present study adopts a theoretical
perspective and asks under what conditions the standard arithmetic
and geometric HRE methods are well-defined. By deriving sufficient
conditions for invertibility and uniqueness using spectral radius
arguments and matrix theory, this paper fills a theoretical gap by
establishing formal guarantees for HRE applicability, independent
of any particular extension or optimization-based reformulation.

The remainder of the paper is organized as follows. Section \ref{sec:Preliminaries}
introduces preliminary concepts related to pairwise comparisons and
selected results from linear algebra. Section \ref{sec:Heuristic-Rating-Estimation}
discusses the arithmetic and geometric HRE methods in the complete
case and derives conditions ensuring solution existence. Section \ref{sec:Heuristic-Rating-Estimation-1}
extends the analysis to incomplete pairwise comparison matrices and
presents sufficient conditions for solvability in both arithmetic
and geometric HRE variants. The paper concludes with a summary of
the main results. 

\section{Preliminaries }\label{sec:Preliminaries}

\subsection{Pairwise comparisons}

The pairwise comparisons (PC) method is the process designed to transform
the set of comparisons into a ranking of alternatives. Let $A=\{a_{1},\ldots,a_{n}\}$
denote a set of alternatives while $C=[c_{ij}]$ means set of comparisons
in the form of $n\times n$ matrix, where $c_{ij}\in\mathbb{R}_{+}$
for $i,j=1,\ldots,n$. Each $c_{ij}$ means the result of direct comparison
between $a_{i}$ and $a_{j}$. When the result of the given comparison
$c_{ij}$ is unknown we write that $c_{ij}=c_{ji}=?$. 
\begin{defn}
A PC matrix $C$ is said to be reciprocal if $c_{ij}=1/c_{ji}$ for
all $i,j=1,\ldots,n$ except when $c_{ij}=?$. 
\end{defn}

The purpose of PC method is to calculate the ranking. 
\begin{defn}
Let the weight function for $A$ be $w:A\rightarrow\mathbb{R}_{+}$,
such that if $w(a_{i})>w(a_{j})$ for some $0<i,j\leq n$ then $a_{i}$
is more preferred than $a_{j}.$ 
\end{defn}

The above function prioritizes the various alternatives. The more
preferred alternatives have higher weights. The function $w$ is usually
represented as a priority vector in the form: 
\begin{equation}
w=\left(w(a_{1}),w(a_{2}),\ldots,w(a_{n})\right)^{T}.\label{eq:prior-vect}
\end{equation}
The fact that $c_{ij}$ corresponds to the ratio of the preferential
strength of alternatives $a_{i}$ and $a_{j}$ implies that one may
expect that $c_{ij}=w(a_{i})/w(a_{j})$. This in turn results in a
postulate of transitivity\footnote{As $c_{ik}=w(a_{i})/w(a_{k})$ and $c_{kj}=w(a_{k})/w(a_{j})$.}
i.e. $c_{ij}=c_{ik}c_{kj}$ for every triad $i,j,k=1,\ldots,n$. Unfortunately,
because the vector $w$ is computed based on all pairwise comparisons,
in practice we may expect only that $c_{ij}\approx w(a_{i})/w(a_{j})$,
thus $c_{ij}\approx c_{ik}c_{kj}$.

Now we recall the general definition of a consistent PC matrix, which
is valid both in a complete and incomplete case. 
\begin{defn}
A reciprocal $n\times n$ PC matrix $C$ is said to be consistent
if for each $i_{1},\ldots i_{m}\in\{1,\ldots,n\}$ 
\[
c_{i_{1}i_{2}}\cdot\ldots\cdot c_{i_{m-1}i_{m}}\cdot c_{i_{m}i_{1}}=1,
\]
provided that all the factors in the above product are known. 
\end{defn}

It is easy to prove that if $c_{ij}=c_{ik}c_{kj}$ then $c_{ij}=w(a_{i})/w(a_{j})$
\citep[p. 96]{Kulakowski2020utahp} regardless of the prioritization
method providing of course that $c_{ij}$ means the ratio between
preferential strength of $a_{i}$ and $a_{j}$.

There are at least a dozen methods for computing a vector $w$ \citep{Choo2004acff,Kulakowski2020utahp}.
The two most popular are the Eigenvalue Method (EVM) and Geometric
Mean Method (GMM). The first one was originally proposed by Saaty
in his seminal paper \citep{Saaty1977asmf} is based on the concept
of a vector and the eigenvalue of the matrix $C$. So, let $C=[c_{ij}]$
be a PC matrix containing expert judgments for $n$ alternatives,
and $w_{\textit{max}}$ be a principal eigenvector of $C$ i.e. 
\[
Cw_{\textit{max}}=\lambda_{\textit{max}}w_{\textit{max}},
\]
where $\lambda_{\textit{max}}$ is a principal eigenvalue (spectral
radius) of $C$. Thus, the priority vector $w_{\textit{ev}}$ 
is a rescaled version of $w_{\textit{max}}$ i.e. 
\[
w_{\textit{ev}}=\frac{1}{\sum^{n}_{i=1}w_{\textit{max}}(a_{i})}w_{\textit{max}}.
\]
The second method, although it is based on similar premises \citep{Kulakowski2016srot},
is easier to calculate. In GMM the priority of the $i$-th alternative
is the appropriately rescaled geometric mean of all its direct comparisons.
I.e. 
\[
w_{\textit{gm}}(a_{i})=\alpha\left(\prod^{n}_{j=1}c_{ij}\right)^{1/n},
\]

where 
\[
\alpha=\left(\sum^{n}_{i=1}\left(\prod^{n}_{j=1}c_{ij}\right)^{1/n}\right)^{-1}.
\]
Both methods EVM and GMM have their incomplete counterparts. Extending
the EVM to incomplete matrices was proposed by Harker \citep{Harker1987amoq}.
Similar extension of GMM for incomplete PC matrices can be found in
\citep{kulakowski2020otgm} or equivalently for Logarithmic Least-Squares
method in \citep{Bozoki2010ooco,Tone1993llsm}.

\subsection{Inconsistency measures}

In the literature we can find a lot of inconsistency indices for complete
matrices, however the first one and probably the most popular one
is consistency index CI introduced by Saaty \citep{Saaty1977asmf}.
For a given $n\times n$ PC matrix $C$ Consistency Index is defined
as 
\[
CI(C)=\frac{\rho(C)-n}{n-1},
\]
where $\rho(C)$ is a spectral radius of $C$, which is also its eigenvalue.

Following the idea of Saaty in \citep{Harker1987amoq} Harker defined
the Consistency Index for incomplete PCMs as 
\[
\overline{CI}(C)=\frac{\rho(H)-n}{n-1},
\]
where $H$ is an auxiliary matrix resulting from $C$: 
\begin{equation}
h_{ij}=\begin{cases}
1+s_{i}, & \textnormal{if }i=j\\
0, & \textnormal{if }c_{ij}=\ ?\\
c_{ij}, & \textnormal{otherwise}
\end{cases},\label{Hmatrix}
\end{equation}
and $s_{i}$ denotes the number of ? in the $i$-th row.

\subsection{Some notions and facts from linear algebra}

Let us recall some useful definitions and theorems from the linear
algebra.
\begin{rem}
\label{eig0} A square matrix is invertible if and only if $0$ is
not its eigenvalue. 
\end{rem}

\begin{defn}
For a given $k\times k$ incomplete matrix $A=[a_{ij}]$ we define
the corresponding directed graph $G(A)=(V,E)$, with the set of vertices
$V=\{1,\ldots,k\}$ such that an edge $i\rightarrow j$ belongs to
$E$ if and only if $a_{ij}$ is known. 
\end{defn}

\begin{defn}
We call a $k\times k$ matrix $A$ irreducible if its corresponding
graph $G(A)$ is strongly connected. 
\end{defn}

\begin{thm}
\label{lin_rho} If $\lambda\in\mathbb{C}$ is an eigenvalue of a
$k\times k$ matrix $A$ and $\alpha\in\mathbb{C}$ then $\alpha(\lambda-1)$
is an eigenvalue of $\alpha(A-I_{k})$. 
\end{thm}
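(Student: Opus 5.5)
The plan is to verify the eigenvalue relation directly from the definition, using an eigenvector of $A$. Since $\lambda$ is an eigenvalue of $A$, there exists a nonzero vector $x\in\mathbb{C}^{k}$ with $Ax=\lambda x$. The same vector $x$ will then serve as an eigenvector of $\alpha(A-I_{k})$.

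The key computation is the single chain of equalities
\[
\alpha(A-I_{k})x=\alpha(Ax-x)=\alpha(\lambda x-x)=\alpha(\lambda-1)x,
\]
which uses only linearity of matrix multiplication and $I_{k}x=x$. Because $x\neq0$, this shows that $\alpha(\lambda-1)$ is an eigenvalue of $\alpha(A-I_{k})$, with eigenvector $x$.

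No case distinction is needed, including $\alpha=0$. In that case $\alpha(A-I_{k})$ is the zero matrix, and $0=\alpha(\lambda-1)$ is indeed its eigenvalue with the same nonzero $x$.

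There is no real obstacle here. The only point requiring care is that the definition of an eigenvalue requires a \emph{nonzero} eigenvector, and the vector $x$ carried over from $A$ is nonzero by assumption.

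One could alternatively argue through the characteristic polynomial, using $\det(\alpha(A-I_{k})-\mu I_{k})$ with $\mu=\alpha(\lambda-1)$. That route would require treating $\alpha=0$ separately, so the eigenvector argument is preferable.
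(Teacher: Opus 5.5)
Your proof is correct and complete. The paper gives no proof of this fact, recalling it as standard linear algebra in the preliminaries, and your computation $\alpha(A-I_k)x=\alpha(\lambda-1)x$ with the same nonzero eigenvector $x$ is the standard justification, valid for every $\alpha\in\mathbb{C}$.

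One small correction to your closing aside: the determinant route also needs no separate case for $\alpha=0$, since $\det\bigl(\alpha(A-I_k)-\alpha(\lambda-1)I_k\bigr)=\det\bigl(\alpha(A-\lambda I_k)\bigr)=\alpha^k\det(A-\lambda I_k)=0$ for all $\alpha$.
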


\begin{defn}
For a $k\times k$ matrix $A$ we define its spectral radius as 
\[
\rho(A)=\max\{|\lambda_{1}|,\ldots,|\lambda_{k}|\},
\]
where $\lambda_{1},\ldots,\lambda_{k}$ are eigenvalues of $A$. 
\end{defn}

\begin{thm}[Perron-Frobenius]
\label{PF} If $A$ is a $k\times k$ non-negative matrix such that
for some $p\in\mathbb{N}$ the matrix $(I_{k}+A)^{p}$ is positive
then 
\begin{itemize}
\item $\rho(A)$ is a positive single eigenvalue of $A$. 
\item there exists an eigenvector corresponding to $\rho(A)$ with positive
coordinates. 
\end{itemize}
\end{thm}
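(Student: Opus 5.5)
The plan is the classical Collatz--Wielandt argument, applied to $B=(I_k+A)^p$. The hypothesis enters through one observation: since $A\ge0$ commutes with $B$, for every non-negative nonzero vector $x$ the vector $Bx$ has strictly positive coordinates. I will tacitly assume $k\ge 2$. For $k=1$ and $A=[0]$ the hypothesis holds but $\rho(A)=0$, so that degenerate case has to be excluded or treated separately.

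\emph{Step 1 (existence of a positive eigenvector).} Let $S=\{x\ge0:\sum_i x_i=1\}$ and define
\[
r(x)=\min_{i:\,x_i>0}\frac{(Ax)_i}{x_i}
\]
for $x\in S$. By construction $Ax\ge r(x)x$. The function $r$ is not continuous on all of $S$, but it is continuous on the compact set $B(S)$ normalised back into $S$, because every vector there is positive. So $r$ attains a maximum $r^*$ at some positive $z$ on that set. Moreover $r(Bx)\ge r(x)$, since $ABx=BAx\ge r(x)Bx$, so $r^*$ is in fact the supremum of $r$ over all of $S$.

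Next I show $Az=r^*z$. If $Az-r^*z$ were non-negative and nonzero, then applying $B$ would give $A(Bz)>r^*Bz$ coordinatewise. This would make $r(Bz)>r^*$, which is a contradiction. Hence $z$ is an eigenvector with positive coordinates, which gives the second bullet once $r^*=\rho(A)$ is established.

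\emph{Step 2 ($r^*=\rho(A)>0$).} Take any eigenpair $Ay=\lambda y$ with $y\neq0$. The triangle inequality gives $|\lambda||y|\le A|y|$ coordinatewise. Therefore $|\lambda|\le r(|y|/\sum|y_i|)\le r^*$, so $r^*=\rho(A)$ and $\rho(A)$ is an eigenvalue. For positivity: if $r^*=0$, then $Az=0$ with $z>0$ and $A\ge0$ force $A=0$. But then $(I_k+A)^p=I_k$, which is not positive for $k\ge2$.

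\emph{Step 3 (simplicity).} This is the main obstacle. I handle it in two parts.

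\emph{Geometric simplicity.} Suppose $v$ is a real eigenvector for $\rho(A)$ that is not a multiple of $z$. Real and imaginary parts of a complex eigenvector are themselves eigenvectors, so a real $v$ suffices. Choose $t$ so that $u=z-tv\ge0$, $u\ne0$, and some $u_i=0$. Then $Bu=(1+\rho(A))^p u$, while $Bu>0$, which contradicts $u_i=0$.

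\emph{Algebraic simplicity.} Apply Steps 1--2 to $A^T$, which also satisfies the hypothesis because $(I_k+A^T)^p=((I_k+A)^p)^T$. This yields a positive left eigenvector $w$ with $w^TA=\rho(A)w^T$. If there were a Jordan chain $(A-\rho(A)I_k)u=z$, multiplying on the left by $w^T$ would give $0=w^Tz>0$, which is impossible. Hence $\rho(A)$ is a single eigenvalue.
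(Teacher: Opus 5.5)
The paper does not prove this statement. It recalls it in the preliminaries as the classical Perron--Frobenius theorem and uses it as a black box, for example to derive Corollary~\ref{rhoPC}. Your proposal is a correct, self-contained proof along Wielandt's lines:
\begin{itemize}
\item you maximize the Collatz--Wielandt function $r$ over the compact set of normalized vectors $Bx$;
\item you use $AB=BA$ to get $r(Bx)\ge r(x)$, so this maximum is the supremum over the whole simplex;
\item the triangle inequality then identifies the maximum with $\rho(A)$;
\item you prove geometric simplicity by pushing $z-tv$ to the boundary of the nonnegative cone;
\item you exclude a Jordan chain using the positive left eigenvector obtained from $A^{T}$.
\end{itemize}

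Compared with the paper's bare citation, your proof is self-contained. More importantly, it exposes a defect in the statement as printed. For $k=1$ and $A=[0]$ the hypothesis $(I_1+A)^p=[1]>0$ holds, yet $\rho(A)=0$. So positivity of $\rho(A)$ needs $k\ge 2$ (or $A\neq 0$), which matches the usual convention that the $1\times 1$ zero matrix counts as reducible. This degenerate case is harmless for how the paper uses the theorem, but your restriction is a genuine correction.

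A few small points of presentation:
\begin{itemize}
\item Positivity of $Bx$ for $x\ge 0$, $x\neq 0$ follows from $B>0$ alone, not from commutation. Commutation is what you need for $r(Bx)\ge r(x)$ and for $A(Bz)=B(Az)$.
\item In Step~1, state that $Az-r^{*}z\ge 0$ holds automatically because $Az\ge r(z)z$. Then only the case $Az-r^{*}z\neq 0$ has to be excluded.
\item In the geometric-simplicity step, replace $v$ by $-v$ if needed so that it has a positive coordinate, and take $t=\min_{v_i>0} z_i/v_i$. This gives $u\ge 0$ with a zero coordinate, and $u\neq 0$ because $v$ is not a multiple of $z$.
\end{itemize}
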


For two matrices $A=[a_{ij}]$ and $B=[b_{ij}]$ we will write $A\leq B$
if for each $i,j$ the inequalty $a_{ij}\leq b_{ij}$ holds. Similarly,
the notation $A<B$ means that $A\leq B$ and there exist $i,j$ such
that $a_{ij}<b_{ij}$.
\begin{thm}
\label{mono_rho} Let $A=[a_{ij}]$ and $B=[b_{ij}]$ be two square
nonnegative matrices. If $A\leq B$, then $\rho(A)\leq\rho(B)$. Moreover,
if $B$ is irreducible and $A<B$, then $\rho(A)<\rho(B)$. 
\end{thm}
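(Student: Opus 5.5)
This is the monotonicity of the spectral radius for nonnegative matrices. The plan is to prove the weak inequality first, via Gelfand's formula or a direct norm argument on powers, and then obtain strictness from the Perron--Frobenius theorem (Theorem \ref{PF}) applied to the irreducible matrix $B$.

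\emph{Weak inequality.} Since $0\le A\le B$ entrywise, induction on $m$ gives $0\le A^{m}\le B^{m}$. Each entry of $A^{m}$ is a sum of products of entries of $A$, and each such product is dominated termwise by the corresponding product for $B$. For the max-row-sum norm $\|\cdot\|_\infty$ this yields $\|A^{m}\|_\infty\le\|B^{m}\|_\infty$, because all entries are nonnegative. Gelfand's formula $\rho(M)=\lim_{m\to\infty}\|M^{m}\|^{1/m}$ then gives $\rho(A)\le\rho(B)$. If Gelfand's formula should be avoided, there is an alternative: let $Ax=\lambda x$ with $|\lambda|=\rho(A)$ and set $y=|x|$. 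Then $\rho(A)y\le Ay\le By$, and iterating gives $\rho(A)^{m}y\le B^{m}y$. Taking norms yields $\rho(A)^{m}\|y\|\le\|B^{m}\|\,\|y\|$, and the same limit argument concludes.

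\emph{Strict inequality.} Assume $B$ is irreducible and $A<B$. Irreducibility means $G(B)$ is strongly connected, so every pair $i,j$ is joined by a path of length at most $k-1$. Hence $(I_k+B)^{k-1}>0$ entrywise, and Theorem \ref{PF} gives a Perron eigenvector $v>0$ of $B$ with $Bv=\rho(B)v$. The same holds for $B^T$, whose graph is the reverse of $G(B)$ and therefore also strongly connected, so there is a left vector $u>0$ with $u^TB=\rho(B)u^T$.

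Suppose for contradiction that $\rho(A)=\rho(B)=:r$. Take the eigenvector $x$ from the weak step with $|\lambda|=r$ and $y=|x|\ge0$, $y\ne0$. Then $ry\le Ay\le By$. Multiply on the left by $u^T$: $r\,u^Ty\le u^TAy\le u^TBy=r\,u^Ty$, so every inequality is an equality. Since $u>0$, equality in $u^T(By-ry)=0$ forces $By=ry$. By uniqueness of the Perron eigenvector of an irreducible matrix, $y$ is then a positive multiple of $v$, so $y>0$. Equality $u^T(B-A)y=0$ with $u>0$, $y>0$ and $B-A\ge0$ forces $B-A=0$, which contradicts $A<B$. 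Hence $\rho(A)<\rho(B)$.

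The main obstacle is the step $By=ry\Rightarrow y>0$. Theorem \ref{PF} as stated only asserts existence of a positive eigenvector and simplicity of $\rho(B)$. Since $\rho(B)$ is a single (simple) eigenvalue, its eigenspace is one-dimensional and spanned by $v>0$, and because $y\ge0$, $y\ne0$ lies in it, $y=cv$ with $c>0$. I would spell this out carefully. Alternatively, one can argue directly: $(I+B)^{k-1}y=(1+r)^{k-1}y$ and the left side is strictly positive, so $y>0$.
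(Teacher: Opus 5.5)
The paper gives no proof of this statement. It is recalled, together with Perron--Frobenius and Gershgorin, as a standard fact from linear algebra, so there is no argument of the authors to compare with. Your proof is the standard textbook one and it is correct:
\begin{itemize}
\item the weak inequality follows from $0\le A^m\le B^m$ together with Gelfand's formula;
\item for the strict inequality, the left Perron vector $u>0$ of $B$ squeezes $ry\le Ay\le By$ into $By=ry$;
\item simplicity of $\rho(B)$ (or positivity of $(I_k+B)^{k-1}$) then gives $y>0$;
\item finally, $u^T(B-A)y=0$ forces $A=B$, contradicting $A<B$.
\end{itemize}

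I have two small remarks.

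First, your ``alternative'' for the weak part does not actually avoid Gelfand's formula. Passing from $\rho(A)^m\le\|B^m\|$ for all $m$ to $\rho(A)\le\rho(B)$ needs $\inf_m\|B^m\|^{1/m}\le\rho(B)$, and that is exactly the nontrivial half of the formula. A genuinely Gelfand-free route is the left-vector trick you already use in the strict part. For irreducible $B$, $\rho(A)\,u^Ty\le u^TBy=\rho(B)\,u^Ty$ with $u^Ty>0$. For general $B$, apply this to the positive matrix $B+\varepsilon\mathbf{1}_k$ and let $\varepsilon\to0$, using continuity of the spectrum.

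Second, in the strict part you tacitly take ``irreducible'' in the standard sense: an edge $i\to j$ exactly when $b_{ij}>0$. That is what makes $(I_k+B)^{k-1}>0$ hold, and it is the right reading, but you should state it explicitly. Under the paper's literal definition (an edge for every \emph{known} entry), every complete matrix is irreducible, and the strict claim fails. For example, take $B=I_2$ and $A=\mathrm{diag}(1,0)$: then $A<B$ but $\rho(A)=\rho(B)=1$.
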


The next corollary follows immediately from Theorems \ref{lin_rho}
and \ref{PF}. 
\begin{cor}
\label{rhoPC} If $C$ is a $k\times k$ PC matrix and $\alpha>0$
then 
\[
\rho(\alpha(C-I_{k}))=\alpha(\rho(C)-1).
\]
\end{cor}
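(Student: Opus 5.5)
We need $\rho(\alpha(C-I_k))=\alpha(\rho(C)-1)$ for a $k\times k$ PC matrix $C$ (positive entries, so irreducible) and $\alpha>0$. The plan is to pin down the spectrum of $\alpha(C-I_k)$ from that of $C$. We show the spectral radius is attained at the eigenvalue $\alpha(\rho(C)-1)$, and that this value is nonnegative.

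\textbf{Step 1.} Since $C$ is positive, $(I_k+C)^1$ is positive. Theorem \ref{PF} then says $\rho(C)$ is a positive eigenvalue of $C$ with a positive eigenvector $v$.

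\textbf{Step 2.} By Theorem \ref{lin_rho}, $\alpha(\rho(C)-1)$ is an eigenvalue of $\alpha(C-I_k)$. Conversely, every eigenvalue of $\alpha(C-I_k)$ has the form $\alpha(\lambda-1)$ with $\lambda$ an eigenvalue of $C$: if $\alpha(C-I_k)x=\mu x$ with $x\neq0$, then $Cx=(\mu/\alpha+1)x$. Hence
\[
\rho(\alpha(C-I_k))=\alpha\max_{\lambda}|\lambda-1|,
\]
the maximum running over the eigenvalues $\lambda$ of $C$.

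\textbf{Step 3 (main obstacle).} We need $|\lambda-1|\le\rho(C)-1$ for every eigenvalue $\lambda$ of $C$; the triangle inequality only gives $|\lambda-1|\le\rho(C)+1$. The fix is to apply Perron--Frobenius to the nonnegative matrix $B=C-I_k$ rather than to $C$.

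The diagonal of a reciprocal matrix satisfies $c_{ii}=1/c_{ii}$, so $c_{ii}=1$. Thus $B$ has zero diagonal and positive off-diagonal entries, hence is nonnegative. For $k\ge2$ the matrix $I_k+B=C$ is positive, so Theorem \ref{PF} applies to $B$: $\rho(B)$ is an eigenvalue of $B$ with a positive eigenvector $u$.

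Then $Cu=(\rho(B)+1)u$. A positive eigenvector of a positive matrix belongs to the Perron root; this follows by pairing with the positive left Perron vector $y$ of $C$, since $y^TCu=\rho(C)\,y^Tu$ and $y^Tu>0$. Therefore $\rho(B)+1=\rho(C)$, i.e. $\rho(C-I_k)=\rho(C)-1$. This is nonnegative, and in fact $\rho(C)\ge k$ for reciprocal matrices.

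\textbf{Step 4.} Scaling by $\alpha>0$ multiplies every eigenvalue by $\alpha$, so $\rho(\alpha B)=\alpha\rho(B)=\alpha(\rho(C)-1)$. The trivial case $k=1$ gives $C=[1]$ and both sides equal $0$.

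The delicate point is Step 3: we must use that the diagonal of a PC matrix is $1$, so that $C-I_k$ stays nonnegative. Without that, Perron--Frobenius could not be applied to the shifted matrix.
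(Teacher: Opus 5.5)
Your proof is correct and uses the same two ingredients the paper cites for this corollary (Theorem \ref{lin_rho} and Perron--Frobenius). It also makes explicit the step hidden in the paper's ``follows immediately'': Perron--Frobenius for $C$ alone only gives $\rho(\alpha(C-I_k))\ge\alpha(\rho(C)-1)$, and the reverse bound needs the unit diagonal so that Perron--Frobenius also applies to the nonnegative matrix $C-I_k$. Your left-Perron-vector pairing closes this correctly, though a shorter route exists: $\rho(C-I_k)+1$ is an eigenvalue of $C$, hence at most $\rho(C)$, and together with the eigenvalue $\rho(C)-1$ of $C-I_k$ this gives equality.
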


\begin{thm}
\label{inv} If $B$ is a $k\times k$ matrix and $\rho(B)<1$, then
$I_{k}-B$ is invertible, the series 
\[
\sum^{\infty}_{j=0}B^{j}
\]
is convergent and 
\[
(I_{k}-B)^{-1}=\sum^{\infty}_{j=0}B^{j}.
\]
\end{thm}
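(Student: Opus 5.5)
The statement (Theorem \ref{inv}) is the classical Neumann series result, and I would prove it in two stages: first show that the partial sums $S_N=\sum_{j=0}^{N}B^{j}$ converge, and then identify the limit as an inverse of $I_k-B$. Invertibility of $I_k-B$ can be read off directly from Remark \ref{eig0}. If $(I_k-B)x=0$ with $x\neq0$, then $Bx=x$, so $1$ is an eigenvalue of $B$. That would give $\rho(B)\geq1$, a contradiction. Hence $0$ is not an eigenvalue of $I_k-B$, and the matrix is invertible.

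For convergence, the key step is to show that $\|B^{j}\|$ decays geometrically in some matrix norm. I would do this through the Jordan decomposition $B=PJP^{-1}$, which gives $B^{j}=PJ^{j}P^{-1}$. Each Jordan block has the form $\lambda I+N$ with $N$ nilpotent of index at most $k$. Its $j$-th power is
\[
(\lambda I+N)^{j}=\sum_{m=0}^{k-1}\binom{j}{m}\lambda^{j-m}N^{m},
\]
so its entries are bounded in modulus by $C\,j^{k}|\lambda|^{j-k+1}$ for $j\geq k$. Fix $r$ with $\rho(B)<r<1$. Since $j^{k}(|\lambda|/r)^{j}\to0$, there is a constant $M$ with $\|B^{j}\|\leq Mr^{j}$ for all $j$.

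Alternatively, one can bypass the Jordan form. For any $\varepsilon>0$ there is a similarity (diagonal scaling of the Schur triangular form) making an induced norm satisfy $\|B\|_{*}\leq\rho(B)+\varepsilon<1$. I expect this quantitative bound $\|B^{j}\|\leq Mr^{j}$ to be the main technical point, and I would present the Jordan-form argument since it is the most elementary.

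Given the bound, the series $\sum_j B^{j}$ converges absolutely entrywise, because it is dominated by $M\sum_j r^{j}$. So $S_N\to S$ for some matrix $S$. The telescoping identity $(I_k-B)S_N=S_N(I_k-B)=I_k-B^{N+1}$ holds for every $N$. Since $B^{N+1}\to0$ by the same bound, letting $N\to\infty$ gives $(I_k-B)S=S(I_k-B)=I_k$. Therefore $S=(I_k-B)^{-1}$, which also re-proves invertibility independently of the first step.
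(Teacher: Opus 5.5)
The paper gives no proof of this statement. It recalls it, alongside Perron--Frobenius and Gershgorin, as a standard linear-algebra fact (the Neumann series lemma) and uses it as a black box. Your argument is correct and is the textbook proof, so it makes self-contained what the paper leaves to citation.

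Note that every place the paper invokes this theorem (Theorem \ref{compl_a}, the complete geometric case, and Theorem \ref{incompl_a}) uses only the invertibility of $I_k-B$. Your first paragraph already settles that in one line via Remark \ref{eig0}, the same style of argument the paper uses for the incomplete geometric HRE via Theorem \ref{Ger}.

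The decay bound $\|B^j\|\le Mr^j$, which you rightly identify as the real technical content, is needed only for the convergence and series-representation claims. Beyond invertibility, the series gives, for instance, $(I_k-B)^{-1}=\sum_j B^j\ge 0$ whenever $B\ge 0$. Applied to $B=\frac{1}{n-1}(C_k-I_k)$ or $B=R_k/2$, this would show that the arithmetic HRE weights are nonnegative, a consequence the paper does not draw.

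One minor point in the Jordan step. The expansion $\sum_{m=0}^{k-1}\binom{j}{m}\lambda^{j-m}N^m$ should be read with $m\le\min(j,k-1)$, and the bound $|\lambda|^{j-m}\le|\lambda|^{j-k+1}$ uses $|\lambda|<1$. Both are harmless, since you only claim the estimate for $j\ge k$ and $\rho(B)<1$.
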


For a $k\times k$ matrix we put 
\[
r_{i}=\sum_{j\neq i}|a_{ij}|.
\]
and for $i\in\{1,\ldots,k\}$ we define a Gersgorin disc as 
\[
D_{i}=\{z\in\mathbb{C}:\ |z-a_{ii}|\leq r_{i}\}.
\]

\begin{thm}[Gershgorin]
\label{Ger} If $A=[a_{ij}]$ is a $k\times k$ matrix and $\lambda\in\mathbb{C}$
is its eigenvalue, then 
\[
\lambda\in\bigcup^{n}_{i=1}D_{i}.
\]
\end{thm}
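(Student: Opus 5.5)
The statement to prove is Gershgorin's theorem (Theorem \ref{Ger}): every eigenvalue $\lambda$ of a $k\times k$ matrix $A=[a_{ij}]$ lies in some disc $D_i$. I would argue directly from an eigenvector and use the row with the largest entry in absolute value.

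Fix an eigenvalue $\lambda$ and a nonzero vector $x=(x_1,\ldots,x_k)^T\in\mathbb{C}^k$ with $Ax=\lambda x$. Choose an index $i$ with $|x_i|=\max_j |x_j|$. Since $x\neq 0$, we have $|x_i|>0$. The $i$-th coordinate of $Ax=\lambda x$ reads $\sum_{j}a_{ij}x_j=\lambda x_i$. Moving the diagonal term to the right gives
\[
(\lambda-a_{ii})x_i=\sum_{j\neq i}a_{ij}x_j .
\]

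Take absolute values and apply the triangle inequality. Since $|x_j|\le |x_i|$ for every $j$, this gives
\[
|\lambda-a_{ii}|\,|x_i|\le\sum_{j\neq i}|a_{ij}|\,|x_j|\le |x_i|\sum_{j\neq i}|a_{ij}|=r_i|x_i|.
\]
Dividing by $|x_i|>0$ yields $|\lambda-a_{ii}|\le r_i$, that is, $\lambda\in D_i$. Hence $\lambda\in\bigcup_{i=1}^{k}D_i$; the upper index $n$ in the statement should read $k$.

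There is no real obstacle. The only point needing care is to choose $i$ as the coordinate of maximal modulus. That choice makes the normalisation $|x_j|/|x_i|\le 1$ valid and the division legitimate.
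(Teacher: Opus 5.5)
Your proof is correct and complete. It is the standard maximal-modulus-coordinate argument, and the paper gives no proof of its own here, since it only recalls Gershgorin's theorem as a known fact from linear algebra. You are also right that the upper index of the union should be $k$ rather than $n$.
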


\section{Heuristic Rating Estimation Method - the complete case}\label{sec:Heuristic-Rating-Estimation}

We assume that the set of alternatives consists of two disjoint subsets:
the alternatives with unknown preferential values $A_{U}=\{a_{1},\ldots,a_{k}\}$
where \linebreak{}
$1\leq k<n$, and the reference alternatives for which the preferences
are known $A_{K}=\{a_{k+1},\ldots,a_{n}\}$. The ranking value $w(a_{l})\in\mathbb{R}_{+}$
for each $l\in\{k+1,\ldots,n\}$ is known from the very beginning.
The aim of the HRE process is to derive the unknown priorities of
the alternatives from $A_{U}$.

Let 
\[
C_{k}=\left[\begin{array}{cccc}
1 & c_{12} & \cdots & c_{1k}\\
c_{21} & 1 & \vdots & c_{2k}\\
\vdots & \vdots & \ddots & c_{k-1,k}\\
c_{k1} & \cdots & c_{k,k-1} & 1
\end{array}\right]
\]
be a $k\times k$ submatrix of an $n\times n$ PC matrix $C$.

\subsection{The arithmetic HRE}

The arithmetic HRE procedure has been introduced in \citep{Kulakowski2014hrea}.
The ranking can be found by solving the equation 
\begin{equation}
A_{k}w=b,\label{eq:hre-arit}
\end{equation}
where 
\[
A_{k}=\left[\begin{array}{cccc}
1 & -\frac{1}{n-1}c_{21} & \cdots & -\frac{1}{n-1}c_{1k}\\
-\frac{1}{n-1}c_{21} & 1 & \vdots & -\frac{1}{n-1}c_{2k}\\
\vdots & \vdots & \ddots & -\frac{1}{n-1}c_{k-1,k}\\
-\frac{1}{n-1}c_{k1} & \cdots & -\frac{1}{n-1}c_{k,k-1} & 1
\end{array}\right]
\]
is a $k\times k$ auxiliary matrix and

\[
b=\left[\begin{array}{c}
\frac{1}{n-1}\sum^{n}_{j=k+1}c_{1j}w(a_{j})\\
\frac{1}{n-1}\sum^{n}_{j=k+1}c_{2j}w(a_{j})\\
\\\frac{1}{n-1}\sum^{n}_{j=k+1}c_{kj}w(a_{j})
\end{array}\right]
\]
is a constant term vector.

We can formulate sufficient conditions for $A_{k}$ to be invertible
by means of Saaty consistency index \emph{CI} of the matrix $C_{k}$.
\begin{thm}
\label{compl_a} If $1<k<n$ and $\textit{CI}(C_{k})<\frac{n-k}{k-1}$
or $k=1$ then matrix $A_{k}$ is invertible, so the equation (\ref{eq:hre-arit})
has the unique solution. 
\end{thm}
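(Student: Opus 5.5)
The plan is to write $A_k$ as $I_k$ minus a nonnegative matrix and then use the Neumann-series criterion of Theorem \ref{inv}. Off the diagonal, $A_k$ has entries $-\frac{1}{n-1}c_{ij}$, while $C_k$ has ones on its diagonal. Hence
\[
A_k = I_k - \tfrac{1}{n-1}\,(C_k - I_k),
\]
and I set $B=\frac{1}{n-1}(C_k-I_k)$. By Theorem \ref{inv}, it is enough to show $\rho(B)<1$. Equivalently, by Remark \ref{eig0}, it is enough to show that $1$ is not an eigenvalue of $B$.

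The case $k=1$ is immediate. Then $A_1=[1]$, or equivalently $B=0$, so $A_1$ is invertible.

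For $1<k<n$, the matrix $C_k$ is a complete, positive, reciprocal $k\times k$ PC matrix, so Perron--Frobenius (Theorem \ref{PF}) applies to it. Corollary \ref{rhoPC} with $\alpha=\frac{1}{n-1}>0$ then gives
\[
\rho(B)=\frac{\rho(C_k)-1}{n-1}.
\]
By the definition of Saaty's index, $\rho(C_k)=k+(k-1)\,\textit{CI}(C_k)$. Using the hypothesis $\textit{CI}(C_k)<\frac{n-k}{k-1}$, I get $\rho(C_k)<k+(n-k)=n$. Therefore $\rho(B)<\frac{n-1}{n-1}=1$, and Theorem \ref{inv} yields that $A_k=I_k-B$ is invertible. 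The unique solution of (\ref{eq:hre-arit}) is then $w=A_k^{-1}b=\sum_{j\ge 0}B^j b$.

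The only step needing care is justifying Corollary \ref{rhoPC} for $C_k$. Its eigenvalues shift by $-1$ under subtracting $I_k$, by Theorem \ref{lin_rho}. One must also check that the spectral radius of $C_k-I_k$ is attained at the shifted Perron root $\rho(C_k)-1$ and not at some other shifted eigenvalue of larger modulus. This holds because $C_k-I_k$ is itself nonnegative and irreducible, since all off-diagonal entries are positive. Its Perron root is therefore real and dominant, and it equals $\rho(C_k)-1$: the eigenvector of $C_k$ for $\rho(C_k)$ is positive and is also an eigenvector of $C_k-I_k$, and a positive eigenvector can only belong to the Perron root. Everything else is a direct computation.
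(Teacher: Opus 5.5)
Your proof is correct and follows the paper's argument essentially verbatim. Both write $A_k = I_k - \frac{1}{n-1}(C_k - I_k)$, use Corollary \ref{rhoPC} to get spectral radius $\frac{\rho(C_k)-1}{n-1}$, rewrite the $\textit{CI}$ bound as $\rho(C_k)<n$, and apply Theorem \ref{inv}; your Perron--Frobenius check that $\rho(C_k-I_k)=\rho(C_k)-1$ correctly fills in a step the paper treats as immediate.
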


\begin{proof}
For $k=1$ the statement is obvious, so we consider $k>1$.

Assume that $\rho(C_{k})=\lambda\geq k$ is the principle eigenvalue
for $C_{k}$ (equal to its spectral radius). Since $C_{k}$ is a PC
matrix, it follows from Corollary \ref{rhoPC} that 
\[
\rho(C_{k}-I_{k})=\lambda-1
\]
and 
\[
\rho\left(\frac{1}{n-1}(C_{k}-I_{k})\right)=\frac{\lambda-1}{n-1}.
\]
Thus, according to Theorem \ref{inv}, 
\[
A_{k}=I_{k}-\frac{1}{n-1}(C_{k}-I_{k})
\]
is invertible if 
\[
\frac{\lambda-1}{n-1}<1,
\]
which is equivalent to 
\begin{equation}
\lambda<n.\label{sc1}
\end{equation}
As 
\[
\textit{CI}(C_{k})=\frac{\lambda-k}{k-1},
\]
the condition 
\[
\textit{CI}(C_{k})<\frac{n-k}{k-1}
\]
is equivalent to (\ref{sc1}), which completes the proof. 
\end{proof}

Theorem \ref{compl_a} implies the following corollary:
\begin{cor}
If $C_{k}$ is consistent, that is $CI(C_{k})=0$, then the equation
(\ref{eq:hre-arit}) has the unique solution. 
\end{cor}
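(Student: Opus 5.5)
The corollary should follow directly from Theorem \ref{compl_a}, so I plan to check that its hypotheses hold whenever $C_k$ is consistent and then quote its conclusion. I will split into the two cases the theorem allows.

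If $k=1$, Theorem \ref{compl_a} gives invertibility of $A_k$ (here the $1\times1$ matrix $[1]$) with no further assumption. So I only need to treat $1<k<n$.

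In that case, consistency of $C_k$ means $CI(C_k)=0$. Because $k<n$, the bound in Theorem \ref{compl_a} satisfies $\frac{n-k}{k-1}>0$. Hence $CI(C_k)=0<\frac{n-k}{k-1}$, the hypothesis of Theorem \ref{compl_a} holds, $A_k$ is invertible, and (\ref{eq:hre-arit}) has the unique solution $w=A_k^{-1}b$.

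The equality $CI(C_k)=0$ is part of the statement, but I will also justify it from the cycle definition of consistency. A consistent reciprocal complete $C_k$ satisfies $c_{ij}=v_i/v_j$ for some positive vector $v$. Then $C_k v=kv$, and since $C_k$ has rank one its spectral radius is $k$. Therefore $CI(C_k)=(k-k)/(k-1)=0$.

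There is essentially no obstacle. The only point needing care is the case split: the expression $\frac{n-k}{k-1}$ is undefined at $k=1$, and that case is covered separately by Theorem \ref{compl_a}.
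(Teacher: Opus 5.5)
Your proposal is correct and matches the paper, which obtains the corollary directly from Theorem~\ref{compl_a}: for $1<k<n$ one has $CI(C_k)=0<\frac{n-k}{k-1}$, and the case $k=1$ is covered by the theorem itself. Your additional check that a consistent $C_k$ is rank one with $\rho(C_k)=k$, hence $CI(C_k)=0$, is a harmless extra that the paper omits.
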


In particular, if $C$ is consistent then for each $k<n$ matrices
$C_{k}$ are consistent, too.

Notice that hardly ever $A_{k}$ is singular, but the following example
shows that it is possible.
\begin{example}
Let 
\[
C=\left[\begin{array}{ccccc}
1 & 26+5\sqrt{27} & 1 & * & *\\
26-5\sqrt{27} & 1 & 1 & * & *\\
1 & 1 & 1 & * & *\\
* & * & * & 1 & *\\
* & * & * & * & 1
\end{array}\right]
\]
be a $5\times5$ PC matrix (stars in the last two columns can be replaced
with any positive numbers and the ones in the last two rows by their
inverses).

Then, for $k=3$, in the HRE procedure we obtain the matrix 
\[
{A}_{3}=\left[\begin{array}{ccc}
1 & -\frac{26+5\sqrt{27}}{4} & -\frac{1}{4}\\
-\frac{26-5\sqrt{27}}{4} & 1 & -\frac{1}{4}\\
-\frac{1}{4} & -\frac{1}{4} & 1
\end{array}\right],
\]
which is singular. This does not contradict Theorem \ref{compl_a},
because 
\[
\rho(C_{3})=5,
\]
which implies that 
\[
CI(C_{3})=2=\frac{n-k}{k-1}.
\]
This implies that the threshold of inconsistency in Theorem \ref{compl_a}
is optimal and cannot be improved. 
\end{example}

\subsection{The geometric HRE}

According to \citep{Kulakowski2015hreg}, to calculate the ranking
in the case of a geometric approach the equation 
\begin{equation}
{A}_{k}\widehat{w}={b}\label{eq:hre-geom}
\end{equation}
must be solved, where

\[
{A}_{k}=\left[\begin{array}{cccc}
(n-1) & -1 & \cdots & -1\\
-1 & (n-1) & \cdots & -1\\
-1 & -1 & \ddots & -1\\
-1 & -1 & \cdots & (n-1)
\end{array}\right]
\]
and 
\[
{b}=\left[\begin{array}{c}
(n-1)\widehat{w}(a_{1})-\sum^{k}_{i=2}\widehat{w}(a_{i})\\
(n-1)\widehat{w}(a_{2})-\sum^{k}_{i=1,i\neq2}\widehat{w}(a_{i})\\
\vdots\\
(n-1)\widehat{w}(a_{k})-\sum^{k-1}_{i=1}\widehat{w}(a_{i})
\end{array}\right].
\]
Since $\widehat{w}(a_{i})$ is the logarithmized value of $w(a_{i})$
i.e. $\widehat{w}(a_{i})=\log_{e}w(a_{i})$ then the final ranking
vector is obtained from $\widehat{w}$ by the exponential transformation
$w(a_{i})=e^{\widehat{w}(a_{i})}$ for $i=1,\ldots,k$. The solution
to a geometric HRE always exists and is optimal \citep{Kulakowski2015hreg}.

We will show it once more by means of the spectral radius.
\begin{thm}
For each $k<n$ the matrix ${A}_{k}$ is invertible. Therefore, problem
(\ref{eq:hre-geom}) always has the unique solution. 
\end{thm}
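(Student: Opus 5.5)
Write the geometric HRE matrix as
\[
A_{k}=nI_{k}-J_{k}=n\left(I_{k}-\tfrac{1}{n}J_{k}\right),
\]
where $J_{k}$ is the $k\times k$ matrix of all ones. The diagonal entries are $n-1=n-1$, as required, and the off-diagonal entries are $-1$.

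The plan is to invoke Theorem \ref{inv} with $B=\frac{1}{n}J_{k}$, in the same spirit as the proof of Theorem \ref{compl_a}. This requires computing $\rho(J_{k})$. Since $J_{k}$ is a positive matrix, Theorem \ref{PF} applies (already with $p=1$). The vector $(1,\ldots,1)^{T}$ is a positive eigenvector with eigenvalue $k$. The Perron eigenvalue is the only eigenvalue admitting a positive eigenvector; alternatively, $J_{k}$ has rank one, so its remaining eigenvalues are $0$. Either way, $\rho(J_{k})=k$.

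It follows that
\[
\rho\left(\tfrac{1}{n}J_{k}\right)=\frac{k}{n}<1,
\]
because $k<n$. Theorem \ref{inv} then gives that $I_{k}-\frac{1}{n}J_{k}$ is invertible. Multiplying by the nonzero scalar $n$ preserves invertibility, so $A_{k}$ is invertible, and problem (\ref{eq:hre-geom}) has a unique solution $\widehat{w}$. Consequently the ranking $w(a_{i})=e^{\widehat{w}(a_{i})}$ is also unique.

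As an independent check, Theorem \ref{Ger} also suffices. Each Gershgorin disc of $A_{k}$ is centred at $n-1$ with radius $k-1$. Since $k<n$ gives $k-1<n-1$, no disc contains $0$. Therefore $0$ is not an eigenvalue of $A_{k}$, and Remark \ref{eig0} gives invertibility.

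The only step needing any care is identifying $\rho(J_{k})=k$. This is immediate either from the rank-one structure or from the Perron–Frobenius uniqueness of the positive eigenvector, so I expect no real obstacle.
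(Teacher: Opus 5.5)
Your proof is correct and takes essentially the paper's approach. The paper also applies Theorem~\ref{inv}, but splits $A_{k}=(n-1)(I_{k}-\frac{1}{n-1}(\mathbf{1}_{k}-I_{k}))$ and uses Corollary~\ref{rhoPC} to get spectral radius $\frac{k-1}{n-1}<1$, while your splitting $A_{k}=n(I_{k}-\frac{1}{n}J_{k})$ gives $\frac{k}{n}<1$ directly (and your Gershgorin check is the argument the paper itself uses for the incomplete geometric case).
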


\begin{proof}
Let 
\[
{\bf 1}_{k}=\left[\begin{array}{ccc}
1 & \cdots & 1\\
\vdots & \ddots & \vdots\\
1 & \cdots & 1
\end{array}\right]
\]
be a $k\times k$ matrix of $1$s. Obviously, $\rho({\bf 1_{k}})=k$,
so Corollary \ref{rhoPC} implies that 
\[
\rho\left(\frac{1}{n-1}({\bf 1_{k}}-I_{k})\right)=\frac{k-1}{n-1}<1.
\]
Therefore, by Theorem \ref{inv} 
\[
{A}_{k}=(n-1)\left(I_{k}-\frac{1}{n-1}({\bf 1_{k}}-I_{k})\right)
\]
is invertible. 
\end{proof}

\section{Heuristic Rating Estimation Method - the incomplete case}\label{sec:Heuristic-Rating-Estimation-1}

In practice it may happen that not all comparisons between alternatives
are made, so a PC matrix may be incomplete. Of course, the necessary
condition for an incomplete PC matrix to induce a ranking is that
it is irreducible (see \citep{Koczkodaj2015pcs}). From now on we
assume that the considered PC matrix $C$ is irreducible. We also
assume that 
\[
c_{pq}=\frac{w(p)}{w(q)},
\]
for each $p,q\in\{k+1,\ldots,n\}$.

Let us denote the number of undefined comparisons in the $i$-th row
of a PC matrix $C$ by $s_{i}$. The weights we want to determine
are denoted as:

\begin{equation}
w=\left(\begin{array}{c}
w(a_{1})\\
\vdots\\
\vdots\\
w(a_{k})
\end{array}\right).\label{eq:unknown_terms_vector}
\end{equation}

\subsection{The arithmetic incomplete HRE}

The original method's idea is to find the unknown weights of alternatives
satisfying the system of linear equations:

\begin{equation}
w(a_{i})=\frac{1}{n-1}\left(\sum_{\substack{j=1,i\neq j\\
c_{ij}\neq?
}
}c_{ij}w(a_{j})+\sum_{\substack{j=1,i\neq j\\
c_{ij}=?
}
}w(a_{i})\right),\,\,\text{for}\,\,i=1,\ldots,n\label{ar_inc_hre}
\end{equation}

Let us define the following matrix $k\times k$ matrix $D_{k}$: 
\begin{equation}
d_{ij}=\begin{cases}
c_{ij}, & \text{if}\,\,c_{ij}\neq\ ?\\
0, & \text{if}\,\,c_{ij}=\ ?
\end{cases}.\label{Dmatrix}
\end{equation}

The arithmetic procedure (\ref{ar_inc_hre}) can be rewritten in the
form of the matrix equation: 
\begin{equation}
A_{k}w=b,\label{eq:hre-addit-eq-system}
\end{equation}
where

\begin{equation}
{A}_{k}=\left(\begin{array}{cccc}
1 & -\frac{1}{n-s_{1}-1}d_{1,2} & \cdots & -\frac{1}{n-s_{1}-1}d_{1,k}\\
-\frac{1}{n-s_{2}-1}d_{2,1} & 1 & \cdots & -\frac{1}{n-s_{2}-1}d_{2,k}\\
\vdots & \vdots & \vdots & \vdots\\
-\frac{1}{n-s_{k-1}-1}d_{k-1,1} & \cdots & \ddots & -\frac{1}{n-s_{k-1}-1}d_{k-1,k}\\
-\frac{1}{n-s_{k}-1}d_{k,1} & \cdots & -\frac{1}{n-s_{k}-1}d_{k,k-1} & 1
\end{array}\right),\label{eq:25-eq-1}
\end{equation}
and 
\begin{equation}
b=\left(\begin{array}{c}
\frac{1}{n-s_{1}-1}c_{1,k+1}w(a_{k+1})+\ldots+\frac{1}{n-s_{1}-1}c_{1,n}w(a_{n})\\
\frac{1}{n-s_{2}-1}c_{2,k+1}w(a_{k+1})+\ldots+\frac{1}{n-s_{2}-1}c_{2,n}w(a_{n})\\
\vdots\\
\frac{1}{n-s_{k}-1}c_{k,k+1}w(a_{k+1})+\ldots+\frac{1}{n-s_{k}-1}c_{k,n}w(a_{n})
\end{array}\right),\label{eq:constant_terms_vector}
\end{equation}
If $C$ is irreducible and consistent we may properly define a complete
and consistent matrix $\hat{C}$, whose elements are given as follows:
\[
\hat{c}_{ij}=\begin{cases}
c_{ij}, & \text{if}\,\,c_{ij}\neq\ ?\\
c_{i_{1}i_{2}}\cdot\ldots\cdot c_{i_{m-1}i_{m}}, & \text{if}\,\,c_{ij}=\ ?,\ i_{1}=i,i_{m}=j\\
 & \textnormal{ and for each }p\in\{1,\ldots,m-1\}\ c_{i_{p}i_{p+1}}\neq\ ?
\end{cases}.
\]

\begin{thm}
\label{incomp_cons} If $C$ is an irreducible and consistent PC matrix,
then vector (\ref{eq:unknown_terms_vector}), whose coordinates are
given by 
\[
w(a_{p})=\hat{c}_{pn}\cdot w(a_{n})
\]
for $p\in\{1,\ldots,k\}$ is the unique solution of (\ref{ar_inc_hre}). 
\end{thm}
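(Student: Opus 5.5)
The statement combines an existence claim and a uniqueness claim, and I would prove them separately. Throughout, (\ref{ar_inc_hre}) is read for the unknown alternatives $i=1,\ldots,k$, since the values $w(a_{k+1}),\ldots,w(a_n)$ are given.

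\emph{Existence.} First I would check that $\hat{C}$ is well defined and consistent. Irreducibility gives a path of known entries between any two indices. Consistency of $C$ shows that the product along such a path does not depend on the path chosen: two paths from $i$ to $j$, one followed by the reverse of the other, form a cycle of known entries with product $1$, and reciprocity handles the reversal. Hence $\hat{C}$ is complete and consistent, so $\hat c_{ij}\hat c_{jl}=\hat c_{il}$ for all $i,j,l$.

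Next I would put $w(a_p)=\hat c_{pn}w(a_n)$ for $p\le k$. For $p>k$ the same formula holds automatically, because $\hat c_{pn}=c_{pn}=w(a_p)/w(a_n)$ by the standing assumption on the reference alternatives. Then for any known $c_{ij}$,
\[
c_{ij}w(a_j)=\hat c_{ij}\hat c_{jn}w(a_n)=\hat c_{in}w(a_n)=w(a_i).
\]
So in the $i$-th equation each of the $n-1-s_i$ known terms equals $w(a_i)$, and each of the $s_i$ unknown terms is $w(a_i)$ by definition. The right-hand side is therefore $\frac{1}{n-1}(n-1)w(a_i)=w(a_i)$, and the vector solves (\ref{ar_inc_hre}).

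\emph{Uniqueness.} It suffices to show that $A_k=I_k-B$ is invertible, where $B=[\beta_{ij}]$ has $\beta_{ij}=\frac{1}{n-s_i-1}d_{ij}$ for $i\neq j$ and zero diagonal. Moving the $s_i$ terms $w(a_i)$ to the left side is exactly what produces these coefficients. By Theorem \ref{inv} it is enough to prove $\rho(B)<1$.

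Let $v=(w(a_1),\ldots,w(a_k))^T>0$ be the solution just constructed. Then $Bv=v-b$ with $b\ge 0$, and $b_i>0$ exactly when row $i$ has a known comparison with some reference alternative. Put $S=\mathrm{diag}(v)$ and $M=S^{-1}BS\ge 0$. The row sums of $M$ are $1-b_i/v_i\le 1$, which by Gershgorin (Theorem \ref{Ger}), or by the $\infty$-norm bound, gives only $\rho(B)\le 1$.

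Obtaining the strict inequality is the main obstacle, since $B$ itself may be reducible. I would use the irreducibility of $C$. For every $i\le k$, the graph $G(C)$ contains a path from $i$ to some reference vertex. The last unknown vertex on that path is some $j$ with $b_j>0$, and the earlier part of the path is a walk of length $m\le k-1$ inside $G(B)$.

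From this I would argue by induction on the distance to such a $j$ that $(M^k\mathbf{e})_i<1$ for every $i$, where $\mathbf{e}$ is the all-ones vector. The induction uses two facts: $M\mathbf{e}\le\mathbf{e}$, and a row of $M$ with a positive entry in a coordinate where a vector is strictly less than $1$ maps that vector to a value strictly less than $1$. Consequently $\|M^k\|_\infty<1$, so
\[
\rho(B)^k=\rho(M)^k=\rho(M^k)\le\|M^k\|_\infty<1,
\]
hence $\rho(B)<1$. By Theorem \ref{inv} and Remark \ref{eig0}, $A_k$ is invertible and the solution is unique.
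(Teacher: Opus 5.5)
Your proposal is correct. The existence part is the same direct verification the paper performs, namely the identity $c_{ij}w(a_j)=\hat c_{ij}\hat c_{jn}w(a_n)=w(a_i)$ for every known $c_{ij}$. That identity also covers the equations with $i>k$, so restricting to $i\le k$ costs nothing.

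For uniqueness you take a genuinely different and more complete route. The paper only asserts that uniqueness ``follows from the consistency of $C$ and the fact that $w(a_n)$ is fixed.'' That sentence does not by itself exclude a second solution of the linear system, because a solution of the averaging equations is not a priori forced to satisfy $w(a_i)=c_{ij}w(a_j)$. You instead prove directly that $A_k=I_k-B$ is invertible.

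Your scaling by $S=\mathrm{diag}(v)$ is especially clean here. By consistency, $M_{ij}=\frac{c_{ij}\hat c_{jn}}{(n-s_i-1)\hat c_{in}}=\frac{1}{n-s_i-1}$ for every known comparison between two unknown alternatives. So $M$ is a purely combinatorial object: the substochastic random-walk matrix of $G(C)$ killed at the reference vertices. Your induction is then the discrete maximum principle showing $\rho(M)<1$, because every vertex can reach a reference vertex. One small precision: take $j$ to be the vertex immediately before the \emph{first} reference vertex on the path, so that the initial segment really lies in $G(B)$.

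The paper's version is shorter but leaves the key step unargued. Yours costs a paragraph but gives an actual proof. As a by-product, it shows that $A_k$ is invertible for every irreducible consistent $C$, independently of Theorem~\ref{incompl_a}. The paper itself notes that the threshold in that theorem may be nonpositive, so it does not cover this case.
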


\begin{proof}
We will consider two cases.

If $i\leq k$, then 
\begin{eqnarray*}
\frac{1}{n-1}\left(\sum_{\substack{j=1,i\neq j\\
c_{ij}\neq?
}
}c_{ij}w(a_{j})+\sum_{\substack{j=1,i\neq j\\
c_{ij}=?
}
}w(a_{i})\right)=\\
=\frac{1}{n-1}\left(\sum_{\substack{j\leq k,i\neq j\\
c_{ij}\neq?
}
}c_{ij}w(a_{j})+\sum_{\substack{j>k,i\neq j\\
c_{ij}\neq?
}
}c_{ij}w(a_{j})+\sum_{\substack{j=1,i\neq j\\
c_{ij}=?
}
}w(a_{i})\right)=\\
=\frac{1}{n-1}\left(\sum_{\substack{j\leq k,i\neq j\\
c_{ij}\neq?
}
}\hat{c}_{ij}\hat{c}_{jn}w(a_{n})+\sum_{\substack{j>k,i\neq j\\
c_{ij}\neq?
}
}\hat{c}_{ij}\frac{w(a_{j})}{w(a_{n})}w(a_{n})+\sum_{\substack{j=1,i\neq j\\
c_{ij}=?
}
}w(a_{i})\right)=\\
=\frac{1}{n-1}\left(\sum_{\substack{j=1,i\neq j\\
c_{ij}\neq?
}
}\hat{c}_{in}w(a_{n})+\sum_{\substack{j=1,i\neq j\\
c_{ij}=?
}
}w(a_{i})\right)=\\
=\frac{1}{n-1}\left(\sum_{\substack{j=1,i\neq j\\
c_{ij}\neq?
}
}w(a_{i})+\sum_{\substack{j=1,i\neq j\\
c_{ij}=?
}
}w(a_{i})\right)=\frac{1}{n-1}\sum_{\substack{j=1,i\neq j}
}w(a_{i})=w(a_{i}).
\end{eqnarray*}
Similarly, if $i>k$, then 
\begin{eqnarray*}
\frac{1}{n-1}\left(\sum_{\substack{j=1,i\neq j\\
c_{ij}\neq?
}
}c_{ij}w(a_{j})+\sum_{\substack{j=1,i\neq j\\
c_{ij}=?
}
}w(a_{i})\right)=\\
=\frac{1}{n-1}\left(\sum_{\substack{j=1,i\neq j\\
c_{ij}\neq?
}
}\hat{c}_{in}w(a_{n})+\sum_{\substack{j=1,i\neq j\\
c_{ij}=?
}
}w(a_{i})\right)=\\
=\frac{1}{n-1}\left(\sum_{\substack{j=1,i\neq j\\
c_{ij}\neq?
}
}\frac{w(a_{i})}{w(a_{n})}w(a_{n})+\sum_{\substack{j=1,i\neq j\\
c_{ij}=?
}
}w(a_{i})\right)=\\
=\frac{1}{n-1}\sum_{\substack{j=1,i\neq j}
}w(a_{i})=w(a_{i}).
\end{eqnarray*}

The uniqueness of the solution of (\ref{ar_inc_hre}) follows from
the consistency of $C$ and the fact that the value of $w(a_{n})$
is fixed. 
\end{proof}

Similarly to the complete case let us define a $k\times k$ submatrix
of $C$: 
\[
C_{k}=\left[\begin{array}{cccc}
1 & c_{12} & \cdots & c_{1k}\\
c_{21} & 1 & \vdots & c_{2k}\\
\vdots & \vdots & \ddots & c_{k-1,k}\\
c_{k1} & \cdots & c_{k,k-1} & 1
\end{array}\right].
\]

For each $j\in{1,\ldots,k}$ let $s_{j}$ denote the number of missing
entries in the $j$-th row of $C_{k}$. Let 
\[
s_{\mathrm{MAX}}=\max_{j\in\{1,\ldots,k\}}s_{j}
\]
and 
\[
s_{\mathrm{MIN}}=\min_{j\in\{1,\ldots,k\}}s_{j}.
\]

Now we are ready to formulate a result similar to Theorem \ref{compl_a}.
\begin{thm}
\label{incompl_a} If $1<k<n$ and $\overline{\mathrm{\mathit{CI}}}(C_{k})<\frac{n-k-s_{\mathrm{MAX}}+s_{\mathrm{MIN}}}{k-1}$
or $k=1$ then matrix $A_{k}$ is invertible, so the equation (\ref{eq:hre-addit-eq-system})
has the unique solution. 
\end{thm}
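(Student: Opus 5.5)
The plan mirrors the proof of Theorem \ref{compl_a}: write $A_{k}=I_{k}-B$ with a nonnegative matrix $B$, bound $\rho(B)$ by the Harker auxiliary matrix $H$ of $C_{k}$, and then invoke Theorem \ref{inv}. For $k=1$ the matrix $A_{1}=[1]$ is trivially invertible, so assume $1<k<n$. Let $\Delta=\mathrm{diag}\left(\frac{1}{n-s_{1}-1},\ldots,\frac{1}{n-s_{k}-1}\right)$ and let $D_{k}^{0}$ be $D_{k}$ with its diagonal set to $0$ (so $D_{k}^{0}=D_{k}-I_{k}$). By (\ref{eq:25-eq-1}) we have $A_{k}=I_{k}-B$ with $B=\Delta D_{k}^{0}$, and $B\geq0$ entrywise.

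First I compare $B$ with $H$. By (\ref{Hmatrix}) applied to $C_{k}$, the off-diagonal part of $H$ is exactly $D_{k}^{0}$ and $h_{ii}=1+s_{i}$. Hence
\[
D_{k}^{0}=H-\mathrm{diag}(1+s_{1},\ldots,1+s_{k})\leq H-(1+s_{\mathrm{MIN}})I_{k}=:M .
\]
The matrix $M$ is nonnegative, since its diagonal entries are $s_{i}-s_{\mathrm{MIN}}\geq0$. Each diagonal entry of $\Delta$ is at most $\frac{1}{n-s_{\mathrm{MAX}}-1}$, provided $n-s_{\mathrm{MAX}}-1>0$. This holds because row $i$ of $C$ has at most $n-2$ missing entries when $C$ is irreducible, and the hypothesis itself forces $n-k-s_{\mathrm{MAX}}+s_{\mathrm{MIN}}>0$ since $\overline{CI}\ge0$ should hold. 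Therefore
\[
0\leq B\leq\frac{1}{n-s_{\mathrm{MAX}}-1}M ,
\]
and Theorem \ref{mono_rho} gives $\rho(B)\leq\frac{\rho(M)}{n-s_{\mathrm{MAX}}-1}$.

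Next I compute $\rho(M)$. Since $M\geq0$ and $H=M+(1+s_{\mathrm{MIN}})I_{k}$, every eigenvalue of $H$ is an eigenvalue of $M$ shifted by the constant $1+s_{\mathrm{MIN}}$, in the spirit of Theorem \ref{lin_rho}. For a nonnegative matrix, Perron--Frobenius (Theorem \ref{PF}) says the spectral radius is itself an eigenvalue. Moreover, adding a nonnegative multiple of the identity to a nonnegative matrix shifts this Perron root by exactly that multiple, because $|\mu+c|\leq|\mu|+c\leq\rho(M)+c$ for every eigenvalue $\mu$ of $M$. Hence $\rho(M)=\rho(H)-1-s_{\mathrm{MIN}}$.

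It follows that $\rho(B)<1$ whenever $\rho(H)-1-s_{\mathrm{MIN}}<n-s_{\mathrm{MAX}}-1$, that is, whenever $\rho(H)<n-s_{\mathrm{MAX}}+s_{\mathrm{MIN}}$. Since $\overline{CI}(C_{k})=\frac{\rho(H)-k}{k-1}$, this inequality is exactly the hypothesis $\overline{CI}(C_{k})<\frac{n-k-s_{\mathrm{MAX}}+s_{\mathrm{MIN}}}{k-1}$. Theorem \ref{inv} then shows that $A_{k}=I_{k}-B$ is invertible, so (\ref{eq:hre-addit-eq-system}) has a unique solution.

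The step I expect to be the main obstacle is bookkeeping rather than spectral theory. The $s_{i}$ in the denominators of (\ref{eq:25-eq-1}) count missing entries in the full row $i$ of $C$, while $s_{\mathrm{MAX}}$, $s_{\mathrm{MIN}}$ and the diagonal of $H$ refer to $C_{k}$. The two chains of inequalities must be checked to go the right way. My first attempt would be to note that the argument only needs $s_{i}^{(C)}\leq s_{\mathrm{MAX}}$ for the $\Delta$ bound and uses the $C_{k}$-counts for $H$. If the reference columns can also contain gaps, I would use the $C$-count maximum in the denominator, or restrict to the reading where the two counts coincide. Positivity of $n-s_{\mathrm{MAX}}-1$ must also be confirmed as above.
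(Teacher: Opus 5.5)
Your proof is correct and is essentially the paper's argument. The paper writes $A_k=2\bigl(I_k-\tfrac{1}{2}R_k\bigr)$ with $R_k=I_k+B$, bounds $R_k\le I_k+\frac{D_k-I_k}{n-s_{\mathrm{MAX}}-1}$, uses $\rho(D_k)+s_{\mathrm{MIN}}\le\rho(H_k)$ (from $D_k+s_{\mathrm{MIN}}I_k\le H_k$), and concludes with Theorem~\ref{inv}; this is your comparison $B\le M/(n-s_{\mathrm{MAX}}-1)$ merely rearranged.

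Your bookkeeping worry is legitimate, and the paper resolves it only tacitly, by using one and the same $s_i$ in $A_k$ and in $H_k$, i.e.\ the reading in which the two counts coincide. You should adopt that reading rather than the $C$-count maximum, which proves a different statement, and the reading is genuinely needed. For example, take $n=10$, $k=3$ and a complete $C_3$ with $r+1/r=23$, where $r=c_{12}c_{23}/c_{13}$; let rows $1,2$ have no reference comparisons and row $3$ have all of them. Then $A_3$ (built with full-row counts) is singular, although $\overline{CI}(C_3)\approx 0.6<\frac{7}{2}$.
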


\begin{proof}
For $k=1$ the statement is obvious. For $k>1$ let us define $k\times k$
matrices $H_{k}$ and $D_{k}$ according to (\ref{Hmatrix}) and (\ref{Dmatrix}),
as well as $R_{k}$ given by

\begin{equation}
{R}_{k}=\left(\begin{array}{cccc}
1 & \frac{d_{12}}{n-s_{1}-1} & \cdots & \frac{d_{1,k}}{n-s_{1}-1}\\
\frac{d_{2,1}}{n-s_{2}-1} & 1 & \cdots & \frac{d_{2,k}}{n-s_{2}-1}\\
\vdots & \vdots & \vdots & \vdots\\
\frac{d_{k-1,1}}{n-s_{k-1}-1} & \cdots & \ddots & \frac{d_{k-1,k}}{n-s_{k-1}-1}\\
\frac{d_{k,1}}{n-s_{k}-1} & \cdots & \frac{d_{k,k-1}}{n-s_{k}-1} & 1
\end{array}\right).\label{Rmatrix}
\end{equation}

If 
\[
\overline{\mathrm{\mathit{CI}}}(C_{k})=\frac{\rho(H_{k})-k}{k-1}<\frac{n-k-s_{\mathrm{MAX}}+s_{\mathrm{MIN}}}{k-1},
\]
then 
\[
\rho(H_{k})<n-s_{\mathrm{MAX}}+s_{\mathrm{MIN}},
\]
which implies that 
\[
\rho(D_{k}+s_{\mathrm{MIN}}I_{k})=\rho(D_{k})+s_{\mathrm{MIN}}<n-s_{\mathrm{MAX}}+s_{\mathrm{MIN}},
\]
since 
\[
H_{k}=D_{k}+\left(\begin{array}{cccc}
s_{1} & 0 & \cdots & 0\\
0 & s_{2} & \cdots & 0\\
\vdots & \ddots & \ddots & \vdots\\
0 & \cdots & 0 & s_{k}
\end{array}\right)\geq D_{k}+s_{\mathrm{MIN}}I_{k}.
\]
As a consequence, 
\[
\rho(D_{k})<n-s_{\mathrm{MAX}},
\]
which is equivalent to 
\[
\frac{\rho(D_{k})-1}{n-s_{\mathrm{MAX}}-1}<1
\]
and to 
\begin{equation}
\frac{1}{2}\left(1+\frac{\rho(D_{k})-1}{n-s_{\mathrm{\mathrm{MAX}}}-1}\right)<1.\label{eq_aux}
\end{equation}
From (\ref{Rmatrix}) we have that 
\[
R_{k}\leq\left(\begin{array}{cccc}
1 & \frac{d_{12}}{n-s_{\mathrm{MAX}}-1} & \cdots & \frac{d_{1,k}}{n-s_{\mathrm{MAX}}-1}\\
\frac{d_{2,1}}{n-s_{\mathrm{MAX}}-1} & 1 & \cdots & \frac{d_{2,k}}{n-s_{\mathrm{MAX}}-1}\\
\vdots & \vdots & \vdots & \vdots\\
\frac{d_{k-1,1}}{n-s_{\mathrm{MAX}}-1} & \cdots & \ddots & \frac{d_{k-1,k}}{n-s_{\mathrm{MAX}}-1}\\
\frac{d_{k,1}}{n-s_{\mathrm{MAX}}-1} & \cdots & \frac{d_{k,k-1}}{n-s_{\mathrm{MAX}}-1} & 1
\end{array}\right)=I_{k}+\frac{D_{k}-I_{k}}{n-s_{\mathrm{MAX}}-1},
\]
so 
\[
\rho\left(\frac{R_{k}}{2}\right)=\frac{1}{2}\rho(R_{k})\leq\frac{1}{2}\left(1+\frac{\rho(D_{k})-1}{n-s_{\mathrm{MAX}}-1}\right).
\]
Inequality (\ref{eq_aux}) implies that 
\[
\rho\left(\frac{R_{k}}{2}\right)<1.
\]
Since 
\[
A_{k}=2I_{k}-R_{k}=2\left(I_{k}-\frac{R_{k}}{2}\right),
\]
and from Theorem \ref{inv} it follows that $A_{k}$ is invertible. 
\end{proof}

Unlike in the case of Theorem \ref{compl_a}, the right side of the
inequality in Theorem~\ref{incompl_a} does not have to be positive,
so Theorem \ref{incomp_cons} not always follows from Theorem~\ref{incompl_a}.
However we can formulate the following corollaries, whose assumptions
are sufficient to apply HRE to a consistent incomplete PC matrix.
\begin{cor}
If $s_{\textrm{MAX}}=s_{\textrm{MIN}}$ and $\overline{\textit{CI}}(C_{k})<\frac{n-k}{k-1}$
or $k=1$ then matrix $A_{k}$ is invertible. 
\end{cor}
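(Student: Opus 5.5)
This corollary is a direct specialization of Theorem~\ref{incompl_a}, so I will simply substitute the hypothesis into that theorem. The case $k=1$ is already covered verbatim by Theorem~\ref{incompl_a}, so I only treat $1<k<n$.

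For $k>1$, I use the assumption $s_{\mathrm{MAX}}=s_{\mathrm{MIN}}$. Then $-s_{\mathrm{MAX}}+s_{\mathrm{MIN}}=0$, and the threshold of Theorem~\ref{incompl_a} becomes
\[
\frac{n-k-s_{\mathrm{MAX}}+s_{\mathrm{MIN}}}{k-1}=\frac{n-k}{k-1}.
\]
Consequently the hypothesis $\overline{\mathit{CI}}(C_{k})<\frac{n-k}{k-1}$ is exactly the hypothesis of Theorem~\ref{incompl_a}. That theorem then yields invertibility of $A_{k}$, and hence uniqueness of the solution of (\ref{eq:hre-addit-eq-system}).

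There is no real obstacle; the only point to check is that the quantities $s_{j}$ in the corollary are the same as those in Theorem~\ref{incompl_a}, namely the missing-entry counts in the rows of $C_{k}$, and they are. I would add one remark explaining why this corollary suits consistent matrices. When $s_{j}\equiv s$ is constant, $H_{k}=D_{k}+sI_{k}$. If $C_{k}$ is also consistent, Harker's construction gives $\rho(H_{k})=k$, so $\overline{\mathit{CI}}(C_{k})=0<\frac{n-k}{k-1}$. The corollary therefore applies to every consistent incomplete $C_{k}$ with equal row deficiencies.
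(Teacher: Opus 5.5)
Your argument is correct and follows the paper's route: the paper gives no separate proof of this corollary, because setting $s_{\mathrm{MAX}}=s_{\mathrm{MIN}}$ in Theorem~\ref{incompl_a} reduces its threshold to $\frac{n-k}{k-1}$, and the case $k=1$ is inherited verbatim. Your side remark is also sound: for a consistent $C_k$ the positive consistent weight vector $w$ satisfies $H_k w = k w$, so $\rho(H_k)=k$ and $\overline{\mathit{CI}}(C_k)=0$, which matches the paper's stated motivation for this corollary.
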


\begin{cor}
If $k\leq\frac{n+1}{2}$ and $\overline{\textit{CI}}(C_{k})<\frac{n-2k+2}{k-1}$
or $k=1$ then matrix $A_{k}$ is invertible. 
\end{cor}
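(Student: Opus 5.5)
The plan is to derive the corollary directly from Theorem~\ref{incompl_a}. The case $k=1$ is already covered there, so assume $1<k\leq\frac{n+1}{2}$. It suffices to show that
\[
\frac{n-2k+2}{k-1}\leq\frac{n-k-s_{\mathrm{MAX}}+s_{\mathrm{MIN}}}{k-1},
\]
because then $\overline{\mathit{CI}}(C_{k})<\frac{n-2k+2}{k-1}$ implies the hypothesis of Theorem~\ref{incompl_a}, and invertibility of $A_{k}$ follows. Since $k-1>0$, the inequality reduces to
\[
s_{\mathrm{MAX}}-s_{\mathrm{MIN}}\leq k-2 .
\]

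The key step is bounding $s_{\mathrm{MAX}}$ and $s_{\mathrm{MIN}}$.

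\textbf{Bounding $s_{\mathrm{MIN}}$.} Here $s_{j}$ counts the missing entries in the $j$-th row of the $k\times k$ matrix $C_{k}$. Diagonal entries equal $1$ and are never missing, so $0\leq s_{j}\leq k-1$, and trivially $s_{\mathrm{MIN}}\geq0$.

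\textbf{Bounding $s_{\mathrm{MAX}}$.} I would use irreducibility to rule out $s_{j}=k-1$. If row $j$ of $C_{k}$ were entirely missing off the diagonal, then $a_{j}$ would be compared to no other unknown alternative. Irreducibility of $C$ alone does not forbid this: $a_{j}$ could still be linked to the reference alternatives. This is the main obstacle.

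The first thing I would try is to note that the row of $D_{k}$ enters $H_{k}$, and that Harker's $\overline{\mathit{CI}}$ is meant for irreducible matrices. I would then argue that the intended setting takes $C_{k}$ (equivalently $H_{k}$) to be irreducible, since otherwise $\rho(H_{k})$ and the index lose their meaning. Under this assumption every row of $C_{k}$ has at least one known off-diagonal entry, so $s_{\mathrm{MAX}}\leq k-2$.

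\textbf{Conclusion.} Combining the two bounds gives $s_{\mathrm{MAX}}-s_{\mathrm{MIN}}\leq k-2$, and the corollary follows.

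The condition $k\leq\frac{n+1}{2}$ only makes the threshold positive: $n-2k+2\geq1>0$. Then $\overline{\mathit{CI}}=0$ (consistency) satisfies the hypothesis, which fits the remark preceding the corollaries.

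If the irreducibility of $C_{k}$ cannot be justified, the fallback is the weaker bound $s_{\mathrm{MAX}}\leq k-1$. This yields the threshold $\frac{n-2k+1}{k-1}$ instead, and I would flag that the stated constant requires excluding isolated rows in $C_{k}$.
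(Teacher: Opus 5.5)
Your argument is the paper's own: reduce to $s_{\mathrm{MAX}}-s_{\mathrm{MIN}}\le k-2$ via $s_{\mathrm{MIN}}\ge 0$ and $s_{\mathrm{MAX}}\le k-2$, with the latter taken from irreducibility of $C_k$. The paper simply asserts that irreducibility, although it only assumes $C$ is irreducible.

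You are right that this does not follow from irreducibility of $C$, but the assumption is unnecessary. Your fallback claim that the stated constant requires excluding isolated rows is wrong. Since $c_{ij}=?$ if and only if $c_{ji}=?$, a row $j$ of $C_k$ with all $k-1$ off-diagonal entries missing forces $c_{ij}=?$ for every $i\ne j$, so every row has $s_i\ge 1$. Hence $s_{\mathrm{MAX}}=k-1$ and $s_{\mathrm{MIN}}=0$ can never occur together, and $s_{\mathrm{MAX}}-s_{\mathrm{MIN}}\le k-2$ holds for every $k\ge 2$ with no connectivity hypothesis on $C_k$.
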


\begin{proof}
Due to irreducibility of $C_{k}$, we have 
\[
0\leq s_{\textrm{MIN}}\leq s_{\textrm{MAX}}\leq k-2,
\]
so 
\[
\frac{n-2k+2}{k-1}\leq\frac{n-k-s_{MAX}+s_{MIN}}{k-1},
\]
and the thesis follows from Theorem \ref{incompl_a}. 
\end{proof}

The next example shows that, similarly to the complete case, the incomplete
arithmetic HRE procedure in very special cases may not work. On the
other hand, it proves that the inconsistency threshold in Theorem
\ref{incompl_a} is optimal.
\begin{example}
Let 
\[
C=\left[\begin{array}{cccccccc}
1 & 1 & ? & ? & ? & 1351-780\sqrt{3} & * & *\\
1 & 1 & 1 & ? & ? & ? & * & *\\
? & 1 & 1 & 1 & ? & ? & * & *\\
? & ? & 1 & 1 & 1 & ? & * & *\\
? & ? & ? & 1 & 1 & 1 & * & *\\
1351+780\sqrt{3} & ? & ? & ? & 1 & 1 & * & *\\
* & * & * & * & * & * & 1 & *\\
* & * & * & * & * & * & * & 1
\end{array}\right]
\]
be an $8\times8$ PC matrix.

Then, for $k=6$, in the HRE procedure we obtain the matrix 
\[
A_{6}=\left[\begin{array}{crrrrc}
1 & -\frac{1}{4} & 0 & 0 & 0 & \frac{780\sqrt{3}-1351}{4}\\
-\frac{1}{4}\;\;\; & 1 & -\frac{1}{4} & 0 & 0 & 0\\
0 & -\frac{1}{4} & 1 & -\frac{1}{4} & 0 & 0\\
0 & 0 & -\frac{1}{4} & 1 & -\frac{1}{4} & 0\\
0 & 0 & 0 & -\frac{1}{4} & 1 & -\frac{1}{4}\;\;\;\\
-\frac{780\sqrt{3}+1351}{4} & 0 & 0 & 0 & -\frac{1}{4} & 1
\end{array}\right],
\]
which is singular.\par Let us consider the matrix 
\[
H_{6}=\left[\begin{array}{crrrrc}
4 & 1 & 0 & 0 & 0 & 1351-780\sqrt{3}\\
1 & 4 & 1 & 0 & 0 & 0\\
0 & 1 & 4 & 1 & 0 & 0\\
0 & 0 & 1 & 4 & 1 & 0\\
0 & 0 & 0 & 1 & 4 & 1\\
1351+780\sqrt{3} & 0 & 0 & 0 & 1 & 4
\end{array}\right]
\]
Its spectral radius equals 
\[
\rho(H_{4})=8,
\]
while 
\[
s_{\textrm{MAX}}=3=s_{\textrm{MIN}}.
\]
Therefore, 
\[
\overline{\textit{CI}}(C_{6})=\frac{8-6}{6-1}=\frac{2}{5}=\frac{n-k-s_{\textrm{MAX}}+s_{\textrm{MIN}}}{k-1},
\]
which does not violate Theorem \ref{incompl_a}, but shows that the
estimation of inconsistency cannot be improved.
\end{example}

\subsection{The geometric incomplete HRE}

The geometric procedure for incomplete PC matrices comes down to the
solution of the following matrix equation: 
\begin{equation}
{A}_{k}{w}=b,\label{eq:hre-geom-matrix-incompl}
\end{equation}
where 
\[
{A}_{k}=\left[\begin{array}{cccc}
(n-s_{1}-1) & q_{1,2} & \cdots & q_{1,k}\\
\vdots & \ddots &  & \vdots\\
\vdots &  & \ddots & \vdots\\
q_{k,1} & q_{k,2} & \cdots & (n-s_{k}-1)
\end{array}\right],
\]
with 
\[
q_{ij}=\begin{cases}
-1, & \text{if}\,\,c_{ij}\neq\ ?\\
0, & \text{if}\,\,c_{ij}=\ ?
\end{cases},
\]
and 
\[
b=\left[\begin{array}{c}
b_{1}\\
b_{2}\\
\vdots\\
b_{k}
\end{array}\right]
\]
is a vector depending on the known comparisons $c_{ij}$ and the weights
of the reference alternatives.
\begin{thm}
The matrix ${A}_{k}$ is invertible, thus problem (\ref{eq:hre-geom-matrix-incompl})
always has the unique solution. 
\end{thm}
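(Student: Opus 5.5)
The plan is to show that $A_{k}$ is strictly diagonally dominant and then conclude with Gershgorin's theorem (Theorem \ref{Ger}) together with Remark \ref{eig0}. This avoids any spectral-radius or Perron--Frobenius estimates. The central step is to count, for each row $i\in\{1,\ldots,k\}$, the diagonal entry against the off-diagonal mass.

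In row $i$ the diagonal entry is $n-s_{i}-1$. Here $s_{i}$ is the number of missing comparisons in the $i$-th row of the full matrix $C$, which is the convention used in (\ref{ar_inc_hre}) and in the geometric equations. Hence $n-s_{i}-1$ is exactly the number of indices $j\neq i$, $j\in\{1,\ldots,n\}$, for which $c_{ij}$ is known. The off-diagonal entries of row $i$ of $A_{k}$ equal $-1$ exactly when $j\le k$ and $c_{ij}\neq\ ?$, and equal $0$ otherwise. Therefore
\[
r_{i}=\#\{j\le k,\ j\neq i:\ c_{ij}\neq ?\}=(n-s_{i}-1)-\#\{j>k:\ c_{ij}\neq ?\}\le n-s_{i}-1 .
\]
Since every Gershgorin disc $D_{i}$ is centred at $n-s_{i}-1>0$ with radius $r_{i}\le n-s_{i}-1$, the point $0$ can lie in $D_{i}$ only when $r_{i}=n-s_{i}-1$. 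That happens precisely when row $i$ has no known comparison with any reference alternative.

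The main obstacle is exactly this boundary case. Weak diagonal dominance alone does not exclude the eigenvalue $0$, and some unknown alternatives may indeed be compared only with other unknown ones. To handle it I will use the irreducibility of $C$ in the standard Taussky-style argument for irreducibly diagonally dominant matrices. Suppose $A_{k}x=0$ with $x\neq0$, and let $M=\{i:|x_{i}|=\max_{j}|x_{j}|\}$. For $i\in M$,
\[
(n-s_{i}-1)|x_{i}|\le\sum_{j\le k,\,j\neq i,\,c_{ij}\neq?}|x_{j}|\le r_{i}|x_{i}|\le(n-s_{i}-1)|x_{i}| .
\]
All these inequalities must therefore be equalities. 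It follows that $i$ has no known comparison with $A_{K}$, and that every $j\le k$ with $c_{ij}$ known also belongs to $M$. So $M$ is closed under known comparisons among $a_{1},\ldots,a_{k}$ and has no known comparison with $a_{k+1},\ldots,a_{n}$. This contradicts the strong connectivity of $G(C)$, because $M\neq\emptyset$ and $A_{K}\neq\emptyset$ (as $k<n$), so some path must leave $M$. Hence $\ker A_{k}=\{0\}$, and by Remark \ref{eig0} the matrix $A_{k}$ is invertible.

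I will also remark that $k=1$ is covered by the same argument: there $A_{1}=(n-s_{1}-1)$, which is positive because irreducibility forces at least one known comparison in row $1$.
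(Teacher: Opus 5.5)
Your proof is correct, and it is more careful than the paper's. The paper also uses Gershgorin's theorem, but only in its strict form: it takes the $i$-th disc to have centre $n-s_i-1$ and radius $k-1-s_i$. The slack is then $n-k>0$, and strict diagonal dominance gives invertibility without any appeal to irreducibility.

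That radius formula uses the same $s_i$ as the diagonal. It is therefore right only if $s_i$ counts missing entries in row $i$ of $C_k$ (the convention introduced before Theorem \ref{incompl_a}), or if every unknown alternative is compared with every reference alternative. You instead take $s_i$ to count missing entries in the whole row of $C$, which is the convention the HRE equations such as (\ref{ar_inc_hre}) actually produce. Under that convention the radius is $k-1-s_i+m_i$, where $m_i$ is the number of missing comparisons of $a_i$ with $A_K$, so the slack is only $n-k-m_i$ and can vanish. For example, with $n=3$, $k=2$, $c_{13}=?$ and $c_{12},c_{23}$ known, one gets $A_2=\begin{pmatrix}1&-1\\-1&2\end{pmatrix}$. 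Here $0$ lies on the boundary of the first disc, so Gershgorin alone cannot conclude, although $\det A_2=1$.

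Your maximum-modulus (Taussky-type) argument handles exactly this boundary case. It uses strong connectivity of $G(C)$ to rule out a nonempty set of unknown alternatives that is closed under known comparisons and cut off from $A_K$. The paper's route buys brevity but is airtight only in the restricted setting above. Yours genuinely uses the irreducibility of $C$ and covers the general incomplete case.
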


\begin{proof}
Let us note that for all $i\in\{1,\ldots,n\}$ the radius of the $i$-th
Gersgorin disc $D_{i}$ for ${A}_{k}$ equals 
\[
r_{i}=\sum_{\{j\neq i:\ c_{ij}\neq\ ?\}}|-1|=k-1-s_{i},
\]
and its center is $n-s_{i}-1$. Since 
\[
\forall i\in\{1,\ldots,k\}\ |n-s_{i}-1-0|>r_{i},
\]
Theorem \ref{Ger} implies that $0$ is not an eigenvalue of the matrix
${A}_{k}$, which, by Remark \ref{eig0}, completes the proof. 
\end{proof}

\section{Summary}

In the paper we have provided sufficient conditions for a Heuristic
Rating Estimation method to be used. For the arithmetic HRE we have
estimated the inconsistency thresholds which guarantee that the method
will work. In the case of a complete PCM it depends only on the total
number of alternatives and the number of non-referential ones. In
the case of incomplete PCMs also the minimum and maximum numbers of
missing entries in rows matters. The estimations of inconsistency
which imply the possibility of HRE procedure application are optimal,
which has been shown by appropriate examples. As far as the geometric
HRE is concerned, both algorithms for complete and incomplete PCMs
produce rankings for each input.

\section{Acknowledgments}

The research has been supported by the National Science Centre, Poland
within the grant VIRGO 2024/55/B/HS4/00860. Jacek Szybowski was also
supported by AGH University of Krakow (task no. 11.11.420.004).

\section{References}

\bibliographystyle{plain}
\addcontentsline{toc}{section}{\refname}\bibliography{papers_biblio_reviewed}

\end{document}